\newcommand*\circled[1]{\tikz[baseline=(char.base)]{
            \node[shape=circle,draw,inner sep=2pt,fill=white] (char) {#1};}}
\newcommand{\cmark}{\ding{51}}%
\newcommand{\xmark}{\ding{55}}%
\definecolor{darkblue}{RGB}{2,105,164}
\newcommandx{\unsure}[2][1=]{\todo[linecolor=red,backgroundcolor=red!25,bordercolor=red,#1]{#2}}
\newcommandx{\change}[2][1=]{\todo[linecolor=blue,backgroundcolor=blue!25,bordercolor=blue,#1]{#2}}
\newcommandx{\info}[2][1=]{\todo[linecolor=green,backgroundcolor=green!25,bordercolor=green,#1]{#2}}
\newcommandx{\improvement}[2][1=]{\todo[linecolor=Plum,backgroundcolor=Plum!25,bordercolor=Plum,#1]{#2}}
\newcommandx{\thiswillnotshow}[2][1=]{\todo[disable,#1]{#2}}
\begin{document}
\title{Homological Time Series Analysis of Sensor Signals from Power Plants\thanks{The code can be found at: \href{https://codeberg.org/Jiren/TwirlFlake}{https://codeberg.org/Jiren/TwirlFlake}. \\ This project has been partially supported by Siemens Energy AG. \\ The authors thank Leonie Rumi, Noah Becker, Philipp Gäbelein and the anonymous reviewers for useful suggestions for improvement and proofreading.}}
\titlerunning{Homological Time Series Analysis}
% If the paper title is too long for the running head, you can set
% an abbreviated paper title here
%
\author{Luciano Melodia\orcidID{0000-0002-7584-7287} \and \\
Richard Lenz\orcidID{0000-0003-1551-4824}}
\authorrunning{L. Melodia et al.}
% First names are abbreviated in the running head.
% If there are more than two authors, 'et al.' is used.
%
\institute{Professorship for Evolutionary Data Management\\
Friedrich-Alexander University Erlangen-Nürnberg\\
91058 Erlangen, Deutschland \\
\email{\{luciano.melodia,richard.lenz\}@fau.de}}
\maketitle              % typeset the header of the contribution
\begin{abstract}
In this paper, we use topological data analysis techniques to construct a suitable neural network classifier for the task of learning sensor signals of entire power plants according to their reference designation system. We use representations of persistence diagrams to derive necessary preprocessing steps and visualize the large amounts of data. We derive deep architectures with one-dimensional convolutional layers combined with stacked long short-term memories as residual networks suitable for processing the persistence features. We combine three separate sub-networks, obtaining as input the time series itself and a representation of the persistent homology for the zeroth and first dimension. We give a mathematical derivation for most of the used hyper-parameters. For validation, numerical experiments were performed with sensor data from four power plants of the same construction type.

\keywords{Power plants \and Time series \and Signal processing  \and Geometric embedding \and Persistent homology \and Topological data analysis.}
\end{abstract}

\section{Introduction}
Power plants, regardless of their construction, must be intensively maintained and monitored to ensure constantly efficient power generation and to minimize the risk of damage. Sensors measure pressure, temperature, enthalpy, electrical resistance, etc., and their readings are recorded for this reason in order to monitor them. Since power plant operators operate at an international level, there is a need to evaluate information from power plants whose measured values are organized and stored in accordance with the norms that are standard for the specific country. Both the identifiers and the storage structure rarely resemble each other, so signals from power plant sensors must be manually assigned to the appropriate identifiers. These identifiers are called the \emph{power plant reference designation system}, which is defined as an international standard \cite{konigstein2007rds}.

Unfortunately, this standard is not supported by all countries, so seamless mapping is not possible for the time being. In addition, there are problems such as the choice of acronyms for the identifiers, the language and the lack of uniqueness, so that the engineer often performs the mapping manually based on the measured values and previously calculated statistics, since (s)he cannot rely on the predefined identifiers. Classifiers that assign the measured values to the appropriate identifier based on some of their features are suitable for this purpose. Of particular importance are the periodicities or quasi-periodicities occurring within the sensor's signal, which encode certain recurring events within the power plant. We use persistent homology on an embedding of these signals that lies on or dense within an $N$-dimensional torus to encode (quasi-)periodicities. We train neural networks with the raw signal, the zero-dimensional and the one-dimensional homology groups of a filtered toroidal embedding of the signal.

Our work is structured as follows:

\begin{itemize}
  \item[§\ref{primer}] We introduce the theory of persistent homology on triangulable topological spaces. Specifically, we introduce simplicial complexes, filtrations and the associated persistence module, and define some representations of persistent Betti numbers -- and hence the persistence diagram.
  \item[§\ref{embedding}] We derive the assumption to describe a time series as a smooth manifold.
  \item[§\ref{slidingwindowembedding}] We discuss Takens' embedding and the topological and geometrical properties of the sliding window point cloud, which encodes (quasi-)periodicities that can be detected in its persistence diagrams.
  \item[§\ref{heuristics}] We detail the heuristics used to determine an ideal embedding dimension and a time delay, and compute these quantities for our data.
  \item[§\ref{result}] We present the results of examining the data using Betti curves and persistence silhouettes. We evaluate our proposed architecture based on accuracy, F$_1$-score, precision and recall.
  \item[§\ref{discussion}] Finally, we discuss the results and summarize our experiments. We state two issues that arose from our work, particularly with respect to the applicability to other power plants of the same construction type.
\end{itemize}

\section{Primer: Persistent Homology}
\label{primer}
Homology groups are Abelian groups attached to a topological space, counting, in an intuitive sense, the holes of the very same object in a particular dimension. We assume for the persistent homology theory the triangulability of the underlying topological space. This is not essential for homology theories in general, but it is extremely useful for the persistent one. We have no prior knowledge of the underlying space and consequently we must approximate it by some construction on the given points.

Let said topological space contain all points in the data set. We first consider points in general position $\{v_0, \cdots,$ $v_k\} \subset \mathbb{R}^n$, such that the vectors $\{v_1-v_0, \cdots, v_k-v_0\}$ are linearly independent with $k < n$. Thus, the points do not lie on a hyperplane of dimension less than $k$.

Their convex hull is the simplex
\begin{align}
	[v_0, \cdots, v_k] := \left\{ \sum_{i=1}^{k} \lambda_i (v_i-v_0) \; \bigg\vert \; \sum_{i=1}^{k} \lambda_i = 1, \; \lambda_i \geq 0 \right\},
\end{align}
with dimension $k$. The $i$th face of a simplex can be written as
\begin{align}
	d_j [v_0, \cdots, v_k] = [v_0, \cdots, \hat{v}_i, \cdots, v_k],
\end{align}
where $\hat{v}_i$ denotes the removal of the element $v_i$ from the simplex. Note, that the $i$th face is a $(k-1)$-simplex. We call a finite union of such simplices in Euclidean space a simplicial complex $\mathfrak{K}$, if every face of a simplex in $\mathfrak{K}$ is also in $\mathfrak{K}$, and every intersection of two simplices from $\mathfrak{K}$ is either empty or a common face of both. A filtration is a nested sequence
\begin{align}
\mathbb{K}: \quad \emptyset = \mathfrak{K}_0 \subseteq \mathfrak{K}_1 \subseteq \cdots \subseteq \mathfrak{K}_q = \mathfrak{K},
\end{align}
where we assume without restriction of generality that $\mathfrak{K}_i$ = $\mathfrak{K}_q$ for $i \geq q$. The expression $\mathfrak{K}_i$ denotes a one-parameter family of simplicial complexes. The parameter can be interpreted as a particular time point on the filtration and determines how the realization of the simplicial complex looks on the set of points. There are several ways to construct simplicial complexes. For example, the Vietoris-Rips complex $\mathfrak{R}_{i}$ with $i \in \mathbb{R}$, is defined for a set of points as:
\begin{align}
	\mathfrak{R}_{i}(X) := \left\{ U \subseteq X \; \bigg\vert \; \vert\vert x-y\vert\vert \leq i \; \text{for all} \; x,y \in U \right\}.
\end{align}
We are looking for an ideal set of points that contains all the points in our data set. For this, we assume that we can find a suitable set on which we define a topology. This is precisely the assumption of an underlying topological space, which was made at the beginning of the section. If we assume more structure for the topological space we are looking for, such as smooth coordinate maps and a Riemannian metric, we can assume a Riemannian manifold underlying the data.

A Vietoris-Rips complex over a Riemannian manifold (see §\ref{smoothmanifolds}) is homotopy equivalent to the manifold itself for sufficiently small $i$ \cite[§3.5]{hausmann1995vietoris}. Under mild conditions, its homotopy groups have an isomorphism into its respective homology groups by the Hurewicz homomorphism \cite[p.~390,§3]{hatcher2005algebraic,MelodiaL20}. Thus, its homology theory provides an isomorphism to the cohomology theory of a smooth manifold, a suitable description due to the quality of data \cite[§6]{MelodiaL21}.

\subsection{Homology Groups}
The $k$th chain group $C_k$ on $\mathfrak{K}$ is the free Abelian group on the set of $k$-simplices. An element $c \in C_k$ is called $k$-chain and can be written as $c = \sum_{i=1}^{k} \lambda_i \sigma_i$ with $\sigma := [v_0, \cdots, v_k] \in \mathfrak{K}$ with coefficients $\lambda_i$ in any ring. The group becomes a vector space if the coefficients are chosen to be within some field $\mathbb{F}$. The ring of integers $\mathbb{Z}$ modulo a maximal prime ideal -- $\mathbb{Z}/(p\mathbb{Z})$ -- gives us such a field.\footnote{For our work, we use the Mersenne prime $p = 6972593$, because of the efficiency of memory allocation, since it fits into an integer data type and does not cause overflow. One could also use $p=2$, but since about $3\%$ of the data yields homological coefficients other than the one in $\mathbb{Z}_2$, we use the largest possible coefficients field.}

We can study the chain groups on the filtration considering a chain complex, a pair $(C_\star, \partial)$, where $C_\star = \bigoplus_{k \in \mathbb{Z}} C_k$ and $\partial = \bigoplus_{k \in \mathbb{Z}} \partial_k$, with $C_k$ as $\mathbb{F}$-vector space and $\partial_{k+1}: C_{k+1} \rightarrow C_{k}$ as $\mathbb{F}$-linear maps with $\partial_{k} \circ \partial_{k+1} = 0, [v_0,\cdots,v_k] \mapsto \sum_{i=0}^{k} (-1)^i [v_0,\cdots,\hat{v}_i,\cdots,v_k]$. Elements from $\ker \partial_k$ will be called $k$-cycles and to elements from $\text{im} \ \partial_{k+1}$ we will refer to as $k$-boundaries. Thus, each boundary is a cycle. Similarly, cohomology can be defined on simplicial complexes, which is exploited in the implementation of the algorithm for the computation of persistent homology \cite[§4.1]{de2011dualities}. Having the chain complex $(C_\star,\partial)$, the $k$th homology group of chain complexes is defined as a quotient over $\mathbb{F}$ -- or just over some ring -- such that $H_k(C_\star;\mathbb{F}) :=  \ker \partial_k\left(C_\star\right) / \text{im} \  \partial_{k+1}\left(C_\star\right)$. The module is defined as a family of $\mathbb{F}$-vector spaces $V_i$ for a real number $i$ together with $\mathbb{F}$-linear maps $f_{ij}: V_i \rightarrow V_j$, for $i \leq j$, which for a $k \leq i \leq j$ satisfy the equation $f_{kj} = f_{ij} \circ f_{ki}$. We can define persistent homology by considering a family $\{\mathfrak{K}_i\}_{i \in \mathbb{R}}$ of simplicial complexes, with simplicial maps $f_{ij}: \mathfrak{K}_i \rightarrow \mathfrak{K}_j$.

The \emph{persistence module} are the $k$-dimensional homology groups $H_\star(\mathfrak{K}_i; \mathbb{F})$ together with the maps $H_\star(f_{ij}): H_\star(\mathfrak{K}_i; \mathbb{F}) \rightarrow H_\star(\mathfrak{K}_j;\mathbb{F})$ induced by $f_{ij}$.

\subsubsection{Multi-sets} \emph{Persistence diagrams} encode the ranks of a persistence module and are \emph{multi-sets}, a pair consisting of a set and a function $(Z,\varphi)$, where $\varphi: Z \rightarrow \mathbb{R} \cup \{+\infty\}$. For $z \in Z$, $\varphi(z)$ denotes a multiplicity. The union of two multi-sets $(Z,\varphi)$ and $(Z',\varphi')$ is the multi-set $(Z \cup Z', \varphi \cup \varphi')$, with
\begin{equation}
	(\varphi \cup \varphi')(z) =
	\begin{cases}
	\varphi(z), & \text{if} \; z \in Z, z \notin Z',\\
	\varphi'(z), & \text{if} \; z \notin Z, z \in Z',\\
	\varphi(z) + \varphi'(z), & \text{if} \; z \in Z, z \in Z'.\\
	\end{cases}
\end{equation}

\subsection{Persistent Homology}
The persistence module is defined as the collection of all $\mathbb{F}$-vector spaces $V_i$ for a real number $i$ together with the $\mathbb{F}$-linear maps $f_{ij}: V_i \rightarrow V_j$ such that for each pair $i,j$ it holds that $i \leq j$. Moreover, for a $k \leq i \leq j$, it holds that $f_{kj} = f_{ij} \circ f_{ki}$, under the condition that all but finitely many such maps are isomorphisms. Let us choose a filtration of the Vietoris-Rips complex $\{\mathfrak{R}_i(X)\}_{i\in\mathbb{R}}$ together with its simplicial maps $f_{ij}: \mathfrak{R}_i(X) \rightarrow \mathfrak{R}_j(X)$ for each pair $i \leq j$, so that the above mentioned conditions hold. We write the \emph{persistent (simplicial) homology groups} with $\mathbb{F}$-coefficients as persistence module
\begin{align}
	H_\star(\mathfrak{R}_i(X); \mathbb{F}), \ \text{with} \ H_\star(f_{ij}): H_\star(\mathfrak{R}_i(X); \mathbb{F}) \rightarrow H_\star(\mathfrak{R}_j(X); \mathbb{F}).
\end{align}
The persistence diagram is a multi-set of points in $\mathbb{R} \times \left(\mathbb{R} \cup \{+\infty\} \right)$. We consider the case of finite persistence modules, which are represented by persistence diagrams. Such diagrams are realized for a finite and discrete set $I \subset \mathbb{R}$ as real open intervals $\{(b_i,d_j)\}_{i,j\in I}$. The \emph{birth points} $b_i$ and \emph{death points} $d_j$ satisfy for a persistence module $b_i \leq i \leq j < d_j$. The multiplicity of points $(b_i,d_j)$ in the multi-set is equal to $\text{rank}(f_{ij})$. We represent a persistence diagram as a finite multi-set $\mathfrak{P} := \{(\frac{b_i+d_j}{2}, \frac{d_j-b_i}{2})\}_{i,j\in I}$ and $i \leq j$ \cite[§2]{chazal2014stochastic}. We use \emph{persistence landscapes}, a functional representation of  \emph{persistent points} $(\frac{b_i+d_j}{2},\frac{d_j-b_i}{2})$, which are elements of the \emph{persistence diagram} $\mathfrak{P}$, given by the following function \cite[§2.2]{bubenik2015statistical}:
\begin{align}
	\Lambda_i(t) =
	\begin{cases}
		t-b_i, & \text{if} \ t \in [b_i, \frac{b_i+d_j}{2}],\\
		d_j-t, & \text{if} \ t \in (\frac{b_i+d_j}{2},d_j],\\
		0,   & \text{otherwise}.
	\end{cases}
\end{align}

\subsubsection{Silhouettes and Betti Curves}
For points in the multi-set $\mathfrak{P}$ weights $\{w_i = \vert d_j-b_i \vert^p \; \vert \; 0 < p \leq \infty\}_{i,j\in I}$ with $i \leq j$ exist, such that the \emph{$w$-weighted silhouette} of $\mathfrak{P}$ is a representation of a persistence diagram within the vector space of real-valued functions \cite[§2.3]{chazal2014stochastic}:
\begin{align}
	\xi: \mathbb{R} \rightarrow \mathbb{R}, \quad t \mapsto \frac{\sum_{i \in I} w_i \Lambda_i(t)}{\sum_{i\in I} w_i}.
\end{align}
Let the multi-set $\mathfrak{P}$ be a persistence diagram, then we define the \emph{Betti curve} as a function $\beta_{\mathfrak{P}}: \mathbb{R} \rightarrow \mathbb{N}$, whose values on $s \in \mathbb{R}$ are given by the number of points $(b_i,d_j) \in \mathfrak{P}$ -- counted with multiplicity -- such that we satisfy $b_i \leq s \leq d_j$.

We have now introduced persistent homology and the persistence module, and defined persistence diagrams and their representations, which we intend to use for our classifiers. Persistent homology is able to encode (quasi-)periodicities of signals. However, this requires an embedding that encodes the signals as a curve on a compact geometric object, a torus. We discuss this embedding next.
\begin{figure}[!]
	\centering
	\begin{minipage}{.24\linewidth}
		\centering
		\begin{overpic}[trim=70 35 35 70,clip,width=\textwidth]{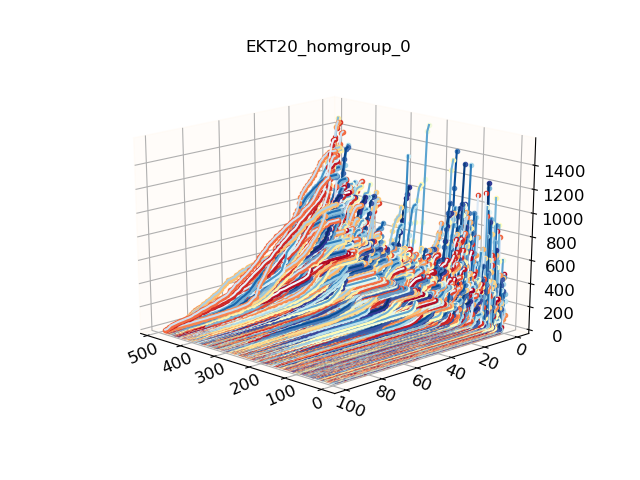}
		 \put (10,50) {\tiny$\circled{1}$}
		 \put (20,4) {\tiny $x$}
		 \put (75,7) {\tiny $y$}
		 \put (92,58) {\tiny $z$}
		\end{overpic}
	\end{minipage}%
	\begin{minipage}{.24\linewidth}
		\centering
		\begin{overpic}[trim=70 35 35 70,clip,width=\textwidth]{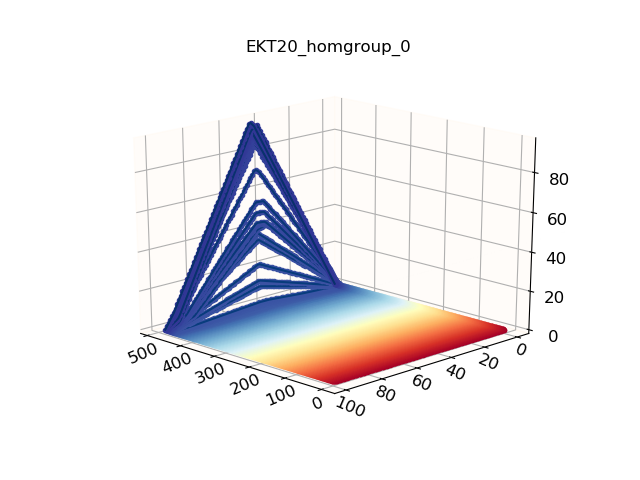}
		 \put (10,50) {\tiny$\circled{2}$}
		 \put (20,4) {\tiny $x$}
		 \put (75,7) {\tiny $y$}
		 \put (92,58) {\tiny $z$}
		\end{overpic}
	\end{minipage}
	\begin{minipage}{.24\linewidth}
		\centering
		\begin{overpic}[trim=70 35 35 70,clip,width=\textwidth]{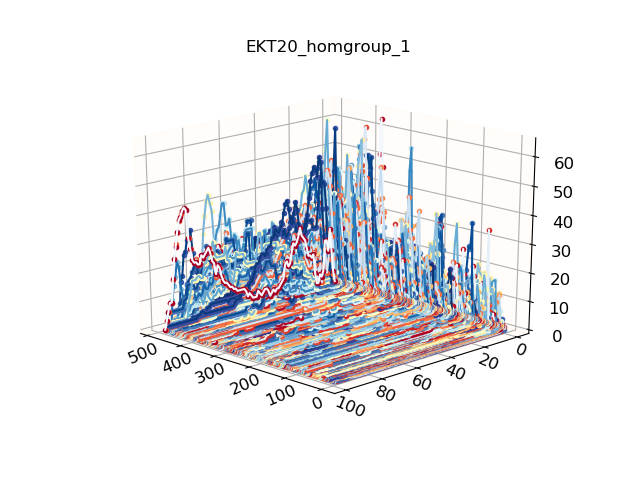}
		 \put (10,50) {\tiny$\circled{3}$}
		 \put (20,4) {\tiny $x$}
		 \put (75,7) {\tiny $y$}
		 \put (92,58) {\tiny $z$}
		\end{overpic}
	\end{minipage}
	\begin{minipage}{.24\linewidth}
		\centering
		\begin{overpic}[trim=70 35 35 70,clip,width=\textwidth]{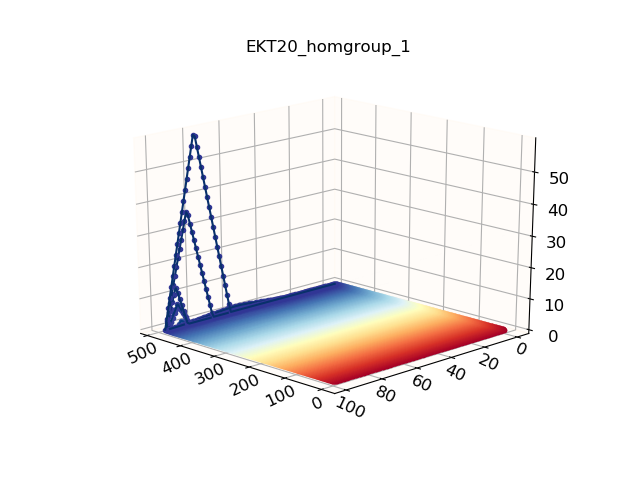}
		 \put (10,50) {\tiny$\circled{4}$}
		 \put (20,4) {\tiny $x$}
		 \put (75,7) {\tiny $y$}
		 \put (92,58) {\tiny $z$}
		\end{overpic}
	\end{minipage}
	\begin{minipage}{.24\linewidth}
		\centering
		\begin{overpic}[trim=70 35 35 70,clip,width=\textwidth]{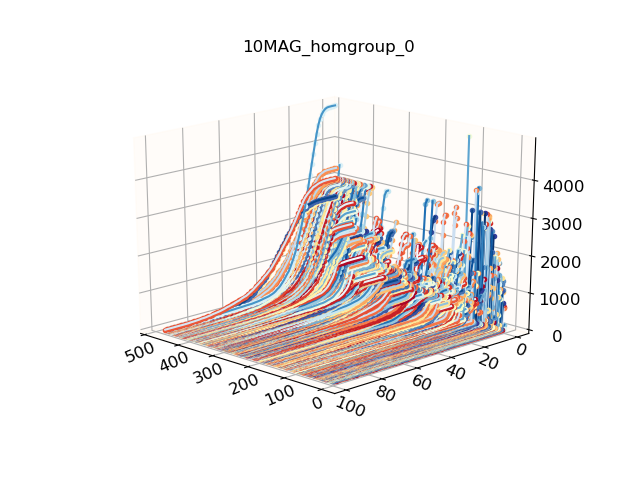}
		 \put (10,50) {\tiny$\circled{5}$}
		 \put (20,4) {\tiny $x$}
		 \put (75,7) {\tiny $y$}
		 \put (92,58) {\tiny $z$}
		\end{overpic}
		\subcaption{$\beta_0$-curves.}
	\end{minipage}%
	\begin{minipage}{.24\linewidth}
		\centering
		\begin{overpic}[trim=70 35 35 70,clip,width=\textwidth]{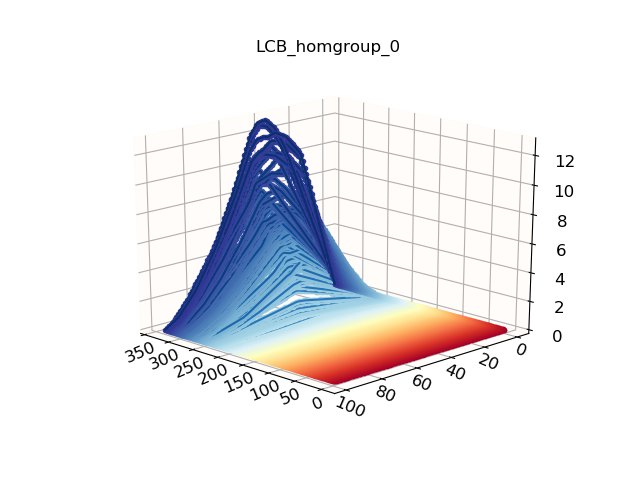}
		 \put (10,50) {\tiny$\circled{6}$}
		 \put (20,4) {\tiny $x$}
		 \put (75,7) {\tiny $y$}
		 \put (92,58) {\tiny $z$}
		\end{overpic}
		\subcaption{$\beta_0$-silhouettes.}
	\end{minipage}
	\begin{minipage}{.24\linewidth}
		\centering
		\begin{overpic}[trim=70 35 35 70,clip,width=\textwidth]{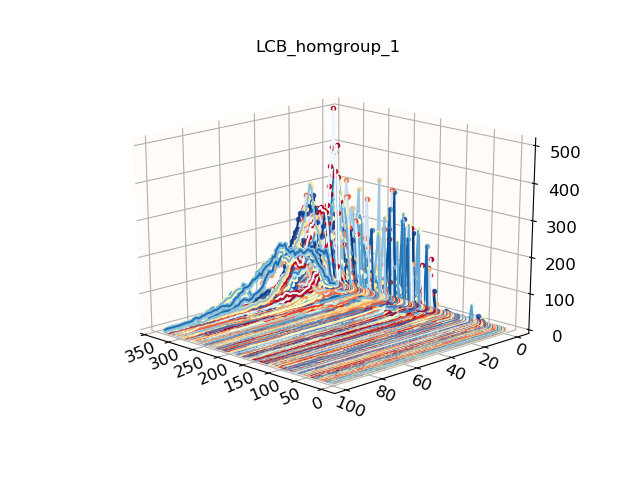}
		 \put (10,50) {\tiny$\circled{7}$}
		 \put (20,4) {\tiny $x$}
		 \put (75,7) {\tiny $y$}
		 \put (92,62) {\tiny $z$}
		\end{overpic}
		\subcaption{$\beta_1$-curves.}
	\end{minipage}
	\begin{minipage}{.24\linewidth}
		\centering
		\begin{overpic}[trim=70 35 35 70,clip,width=\textwidth]{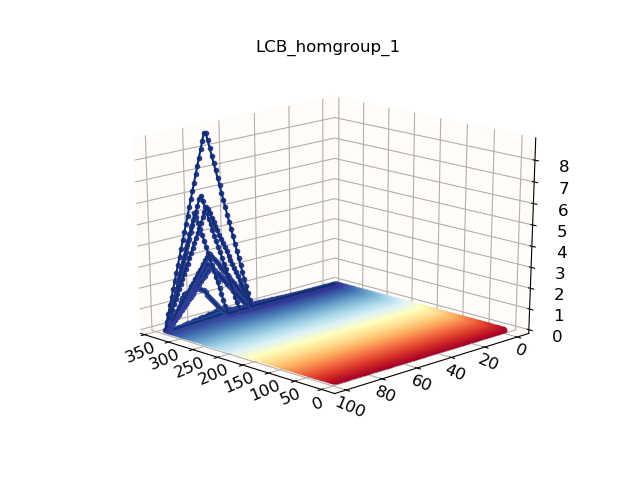}
		 \put (10,50) {\tiny$\circled{8}$}
		 \put (20,4) {\tiny $x$}
		 \put (75,7) {\tiny $y$}
		 \put (92,58) {\tiny $z$}
		\end{overpic}
	\subcaption{$\beta_1$-silhouettes.}
	\end{minipage}
	\caption{Betti curves and silhouettes of power plant signals. The signals are ordered by decreasing persistence entropy. The $x$-axis displays the resulting order. The $y$-axis indicates the parameterization of the simplicial complex normalized to $[0,100]$ -- for visualization purposes. The $z$-axis indicates the number of representatives of the zeroth/first homology group. ($1$-$4$) are zeroth/first Betti curves and persistence silhouettes from the heating medium system of a combined cycle gas turbine power plant with one steam and two gas turbines. ($5$-$8$) show the main condensate pumping plant. The sample size per signal varies between $10^3$ and up to $10^5$.}
	\label{persrepr}
\end{figure}

\section{Time Series Embedding}
\label{embedding}
For the embedding of the time series, which is itself a subset of some topological space $T$, we call a mapping $f: T \rightarrow R$ an embedding in a topological space $R$ if $f$ is a homeomorphism from $T$ to the subspace $f(T)$ of its image. Thus $f$ is said to be continuous and injective such that every open set $O \subseteq T$ is open again as an image $f(O) \subseteq f(T)$. Two reasons lead us to the embedding of a time series:
\begin{enumerate}[noitemsep]
	\item The stability of the persistence diagrams we want to compute is guaranteed only for tame functions \cite[§2]{cohen2007stability}. The function $f$ is tame if it is continuous, all sub-level-sets have homology groups of finite rank, and there are finitely many critical values at which these ranks change.
	\item We want to relate homology groups with periodicity. This is done naturally by embedding on a compact geometric object, the $N$-torus.
\end{enumerate}
A time series is a, possibly finite, strictly totally ordered sequence $(t_i) := \{t_i\}_{i=0}^{n}$ of real numbers and corresponds to the measured quantity by a sensor.\footnote{Usually, arbitrary $n$-tuples are written as $(t_i)_{i=0}^{n}$, but we denote explicitly strictly totally ordered ones. We introduce here non-standard notation to ensure readability.} It can be constructed choosing a point $t_0$ and a time step size $s$, such that $t_i = t_0 + s \cdot i$. Taking the sequence $(t_i) \subset T \subset \mathbb{R}$ and a function $f: T \rightarrow \mathbb{R}$ we get another time series $f \circ (t_i) = (f(t_i))$. The total order for a single sequence gives a one-dimensional manifold with smooth structure. We motivate this choice rigorously.

\subsection{Polynomial Approximation}
Let $\mathcal{T} := \{T_j\}_{j = 0}^{m}$ be a finite family of sets and $T_j := \{(t_i) \; \vert \; t_i \in [a,b], i \in \{1, \ldots, n\}\}$ be a finite strictly totally ordered set of points. Our domain is a compact Hausdorff space since every point in $[a,b]$ can be obtained by the intersection of its closed neighborhoods. We can even obtain a $k$-times differentiable function $f:  M \rightarrow \mathbb{R} \in \mathcal{C}^k(\mathbb{R})$ for a compact Hausdorff space $M$ and for any $k \in \mathbb{Z}$, which in turn can be approximated arbitrarily exactly by a polynomial function $p(t)$.

Thus, we can choose polynomial functions as coordinate functions to describe our time series as a smooth manifold, without restriction of generality.

\begin{theorem}
  For a strictly totally ordered sequence $T_j := \{(t_i) \; \vert \; t_i \in [a,b], i \in \{1, \ldots, n\}\}$ there exists a well defined polynomial function $p: M \rightarrow \mathbb{R}$ on a closed interval $M \subset \mathbb{R}$ -- approximating $T_j$ arbitrarily well.
\end{theorem}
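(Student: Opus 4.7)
The plan is to build a continuous interpolant of the finite data on a closed interval and then invoke the Weierstrass approximation theorem to replace it by a polynomial that tracks the data to within any prescribed tolerance. Because $T_j$ is only a finite strict totally ordered set, there is enough room to choose a parametrization, an intermediate continuous function, and a polynomial approximant in succession.

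First I would fix abscissae $x_1 < \cdots < x_n$ in $M := [a,b]$ (e.g. $x_i = a + (b-a)(i-1)/(n-1)$) and regard the data as the finite graph $\{(x_i,t_i)\}_{i=1}^{n} \subset M \times \mathbb{R}$. On $M$ I would construct a continuous function $g: M \to \mathbb{R}$ matching the data exactly, for concreteness the piecewise linear interpolant through the nodes $(x_i,t_i)$. Continuity is the only property needed from $g$ at this stage; the smoothness that will eventually be required for the manifold interpretation of §\ref{embedding} is provided by the polynomial replacement in the next step.

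Next I would invoke the Weierstrass approximation theorem, applicable because $M$ is a compact Hausdorff space as noted in the paragraph preceding the statement. For every $\varepsilon > 0$ there exists a polynomial $p \in \mathbb{R}[t]$ with $\sup_{t \in M} |g(t) - p(t)| < \varepsilon$. Restricting this uniform bound to the nodes yields $|p(x_i) - t_i| = |p(x_i) - g(x_i)| < \varepsilon$ for every $i$, which is the promised arbitrarily good approximation of $T_j$. Since $p$ lies in $\mathcal{C}^k(\mathbb{R})$ for every $k$, the conclusion is compatible with the smooth-manifold viewpoint the authors wish to justify.

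The only real obstacle is cosmetic, namely pinning down the sense in which $p$ is said to "approximate" the finite set $T_j$. If the intended meaning is merely pointwise approximation at the nodes, the argument above suffices; if exact interpolation is desired, Lagrange interpolation furnishes a polynomial of degree at most $n-1$ with $p(x_i) = t_i$ for all $i$, making the approximation exact and bypassing the Weierstrass step entirely. Either reading yields a well defined polynomial on the closed interval $M$ with the required approximation property.
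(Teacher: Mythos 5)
Your proposal is correct and arrives at the same final step as the paper---compactness of the closed interval via Heine--Borel followed by the (Stone--)Weierstrass approximation theorem---but it is organized differently and is in one respect more complete. The paper's proof spends most of its length establishing that the continuous image of a compact set is compact (a fact not actually needed to invoke Weierstrass, which only requires compactness of the domain $[a,b]$), and it merely \emph{supposes} that a continuous function $f$ taking the prescribed values exists, rather than constructing one. You instead build the intermediate continuous function explicitly (a piecewise linear interpolant through chosen abscissae $x_1 < \cdots < x_n$), which is precisely the step the paper leaves implicit; this makes the passage from the finite data set to a function on which Weierstrass can act airtight. Your closing observation that Lagrange interpolation furnishes a degree-$(n-1)$ polynomial agreeing with the data exactly at the nodes, bypassing Weierstrass entirely, is a legitimate and more elementary route that the paper does not mention. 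The one caveat is the ambiguity you already flag: the theorem never specifies in what sense a polynomial ``approximates'' a finite set, so you (like the paper) must first fix a parametrization of the data as a graph over $[a,b]$ before either argument can be stated precisely.
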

\begin{proof}
  To see this, we use Heine-Borel's theorem, which says that any closed interval of the real line is compact \cite[§9.1.24]{tao2006analysis}. We choose $M = [a,b]$. Let $C \subset [a,b]$ be compact, then $f(C)$ is compact in $\mathbb{R}$. Now, we take a sequence $(t_i)$ in the range set $f(C)$. Let $(h_i)$ be a sequence in the domain of $f$. Then, let for $(h_i) \subseteq f(C)$ and every $i \in \mathbb{N}$ be at least one $t_i \in C$ with $f(t_i) = h_i$. Thus, $(t_i) \subseteq C$ is a sequence. Since $C$ is compact, there exists a convergent sub-sequence $(t_{i_{j}})$ whose limit for $j \rightarrow \infty$ is also contained in $C$. Suppose that $f$ is continuous in $[a,b]$, hence, continuous in $t_i \in [a,b]$. Given the fact that $(t_{i_{j}}) \rightarrow t_i$, we conclude that $(h_{i_{j}}) \rightarrow f(t_i)$. Due to the fact that $t_i$ is in $C$, we have that $f(t_i) \in f(C)$, and thus $f(C)$ is compact by continuity. By \cite[§21]{stone1948generalized}, there exists a polynomial function $p(t_i)$ to approximate $f(t_i)$ with arbitrary error $|f(t_i)-p(t_i)| < \epsilon$ on compact Hausdorff spaces. $\qed$
\end{proof}

\subsection{Smooth Manifold Construction}
\label{smoothmanifolds}
As $p(t)$ is smooth, we can use an argument for smooth functions to yield a smooth manifold and its atlas. Roughly speaking a manifold of dimension $n$ is a topological space locally homeomorphic to $\mathbb{R}^n$. A requirement that provides a manifold would be that $\mathbb{M} \subset \mathbb{R}^n$ and for each $t_i \in \mathbb{M}$ there exists an open ball $B_\epsilon(t_i) = \{m \in \mathbb{M} \; \vert \; d(t_i,m) < \epsilon\}$ given the Euclidean metric $d$ and a smooth function, with a smooth diffeomorphism $\phi_{t_i}:B_\epsilon(t_i) \rightarrow \{z \in \mathbb{R}^n \; | \; ||z|| < 1\}$ \cite[§1]{milnor1997topology}. For any smooth function $f: \mathbb{R} \rightarrow \mathbb{R}$ its graph $\mathcal{G} f := \{(t_i,f(t_i)) \; \vert \; t_i \in \mathbb{R}\}$ is a smooth manifold diffeomorphic to $\mathbb{R}$ with an embedding into $\mathbb{R}^2$, given by inclusion. This also holds for functions, that are smooth on a compact domain. In this example, the polynomial is the special case of such a smooth function. An atlas is given by the one map $\varphi: \mathcal{G} f \rightarrow \mathbb{R}, \varphi(t_i,f(t_i)) \mapsto t_i$. Thus, $f$ is a transition map from $\mathbb{R}$ into $\mathcal{G} f$ with $t_i \mapsto (t_i,f(t_i))$. This means that we can assume without restrictions that the manifold underlying our data is smooth.

The graph of the polynomial is a connected topological space, thus the manifold is also connected. In this construction also the higher homology groups are trivial, as $\mathcal{G}f \cong \mathbb{R}$ are isomorphic as manifolds.

But we want to make the homology groups utilizable to detect patterns in the form of (quasi-)periodicities in sequences. Vice versa, their analysis can provide information about outliers. We discuss the intended embedding, which establishes a connection between periodic signals and non-trivial higher homology groups by embedding the underlying $1$-manifold in an $N$-torus.

\section{Sliding-window Embedding}
\label{slidingwindowembedding}
For the analysis of time series the expected behavior of the frequencies and amplitudes are of importance, thus, the behavior of a physical entity measured by sensors over some time. Subsets of the time series can be (quasi-)periodically recurring or single events. We take advantage of this fact and use Takens' embedding to make this behavior topologically and geometrically explicit.

\subsection{Takens' Embedding}
\label{takens}
We need a representation of the signal in a space that makes (quasi-)periodicities and outliers visible. Mathematically, we obtain an embedding into a manifold $M$ using a map $\varphi:\mathbb{M} \times I \rightarrow \mathbb{M}$, with $\mathbb{M} \subseteq \mathbb{R}^d$. We write the embedded sequence for a function $f: \mathbb{M} \rightarrow \mathbb{R}$ as $(f(\varphi(t_i,l)))$ for each $t_i \in \mathbb{M}, l \in I$. We embed the time series into a compact manifold whose homology groups as well as its  dimension can be inferred.

The dimension is a useful invariant that can be used to determine a variety of machine learning algorithm parameters, such as the maximum number of neurons required in a neural network layer.

We thus interpret the time series as a smooth dynamical system, a pair $(\mathbb{M}, \Psi)$ where $\mathbb{M}$ denotes a smooth manifold and $\Psi: \mathbb{M} \times \mathbb{R} \rightarrow \mathbb{M}$ is a flow such that $\Psi(t_i,0) = t_i$ and $\Psi(\Psi(t_i,r),k) = \Psi(t_i,r+k)$ for all $t_i \in \mathbb{M}$ and $r,k \in \mathbb{R}$.

Takens' embedding theorem makes this clear if we choose $\mathbb{M}$ as a compact Riemannian manifold:

Let $\tau \in \mathbb{R}_+\setminus\{0\}$ and $M \geq 2 \dim \mathbb{M}$ be an integer. Furthermore, if $\Psi \in \mathcal{C}^{\infty}(\mathbb{M} \times \mathbb{R},\mathbb{M})$ is a function and $F \in \mathcal{C}^{\infty}(\mathbb{M},\mathbb{M})$ is generic, then $\psi_{t_i}: \mathbb{M} \rightarrow \mathbb{R}$, $r \mapsto F \circ \Psi(t_i,r)$ is called delay map. Thus, $\psi: \mathbb{M} \rightarrow \mathbb{R}^{M+1}$, $t_i \mapsto$ $(\psi_{t_i}(0),$$\psi_{t_i}(\tau)$,$\psi_{t_i}(2\tau)$, $\cdots$, $\psi_{t_i}(M\tau))$ is the desired smooth embedding \cite[§2]{takens1981detecting,packard1980geometry}.

Let $f: [a,b] \rightarrow \mathbb{R}$ be a function. Further, let $M$ be the \emph{embedding dimension} and $\tau$ be called \emph{time delay}. The \emph{sliding-window embedding} of $f: \mathbb{M} \cong \mathbb{R} \rightarrow \mathbb{R}$ into $\mathbb{R}^{M+1}$ can then be written as \cite[§2]{PereaH15}:
\begin{equation}
\label{slidingwindow}
\text{SW}_{M,\tau} f(t_i) = \left[ f(t_i), f(t_i+\tau), \cdots, f(t_i+M \tau) \right]^{\top}.
\end{equation}
The product $M \tau$ is called the \emph{window size} of the sliding-window embedding. For different values of $t_i \in T := [a,b] \subset \mathbb{R}$ we get a set of points that we call \emph{sliding-window point cloud associated with $T$} \cite[§3.2]{perea2019topological}:
\begin{equation}
\label{slidingwindowpointcloud}
\mathbb{S}\mathbb{W}_{M,\tau} f := \left\{ \text{SW}_{M,\tau} f(t_i) \; \vert \; t_i \in T\right\}.
\end{equation}

We have anticipated an embedding which is visibly smooth, but whose geometry and topology we do not know yet. To make the geometry and topology of the associated sliding window point cloud directly exploitable, we treat the periodic and quasi-periodic cases next.

\subsection{Periodic Signals}
\label{periodic}
A $2\pi$-periodic signal can be expressed as a sum of cosine or sine functions together with its Fourier coefficients. A function $f: [t_i,t_i+2\pi] \rightarrow \mathbb{R}$ is $2\pi$-periodic if $f(t_i+2\pi) = f(t_i)$. We pick up the result that the chosen embedding for $L$-periodic functions is dense within a torus. A function $f$ is $L$-periodic on $[0,2\pi]$, iff $f(t_i + \frac{2\pi}{L}) = f(t_i)$ for all $t_i \in \mathbb{R}$.\footnote{Recall, if our signals were $L$-periodic for some natural number $L$, we could, by definition, treat them as functions on some torus $\mathbb{T} \cong \mathbb{R}/(L\mathbb{Z})$.} Under our main assumption, point-wise convergence is also guaranteed since the approximation is allowed to be smooth.

We use a theorem proven by Perea to understand the topological and geometric structure of the \emph{sliding window point cloud} \cite[§5.6]{PereaH15}.

First, we define a centering map $Z:\mathbb{R}^{M+1} \rightarrow \mathbb{R}^{M+1}$,
\begin{align}
	Z(\mathbf{x}) = \mathbf{x} - \frac{\langle \mathbf{x},\mathbf{1} \rangle}{\vert\vert \mathbf{1} \vert\vert^2}, \quad \text{where} \quad \mathbf{1} = [1, \ldots, 1]^{\top} \in \mathbb{R}^{M+1},
\end{align}
and we use the sliding window embedding on the truncated Fourier transform,
\begin{align}
\text{SW}_{M,\tau} S_{N} f(t_i) = \sum_{n=0}^{N} \cos(nt_i) (a_n \mathbf{u}_n + b_n \mathbf{v}_n) + \sin(nt_i)(b_n\mathbf{u}_n - a_n \mathbf{v}_n),
\end{align}
where $S_{N} f(t_i)$ is the \emph{$N$-truncated Fourier series expansion} of $f$ with some remainder, such that $f(t_i) = S_{N}f(t_i) + R_N f(t_i)$, $j^2 = -1$ and
\begin{align*}
S_{N} f(t_i) &= \sum\limits_{n=0}^{N} a_n \cos(nt_i) + b_n \sin(nt_i) = \sum\limits_{n=-N}^{N} \hat{f}(n) e^{jnt_i},\\ \quad
\hat{f}(n) &= \begin{cases}
\frac{1}{2}a_n - \frac{j}{2}b_n, & \text{if} \; n > 0,\\
\frac{1}{2}a_{-n} - \frac{j}{2}b_{-n}, & \text{if} \; n < 0,\\
a_0, & \text{if} \; n = 0.
\end{cases}
\end{align*}
If we take $f$ to be $L$-periodic, with $L(M+1)\tau=2\pi$, then we yield $\text{SW}_{M,\tau} S_N f(t_i)$ $=$ $\hat{f}(0) \cdot \mathbf{1} + Z(\text{SW}_{M,\tau} S_N f(t_i))$. For its norm we obtain $|| Z(\text{SW}_{M,\tau} S_N f(t_i)) ||$ $=$ $\sqrt{M+1}( ||S_N f ||^2_2 - \hat{f}(0)^2)^{1/2}$. Constructing the orthonormal set $\{ \widetilde{\mathbf{x}}_n,\widetilde{\mathbf{y}}_n \in \mathbb{R}^{M+1} \; \vert \; 1 \le n \leq N, n \equiv 0 \mod L \}$ gives, as proven in \cite[§5.6]{PereaH15}:
\begin{align}
	\varphi_{\tau}(t_i) &:= \frac{\sqrt{M+1}\left( ||S_N f ||^2_2 - \hat{f}(0)^2\right)^{\frac{1}{2}}}{||\sqrt{M+1}\left( ||S_N f ||^2_2 - \hat{f}(0)^2\right)^{\frac{1}{2}}||} \\
  &= \sum\limits_{\substack{n=1 \\ n \equiv 0 \mod L}}^{N} \frac{2|\hat{f}(n)|}{\sqrt{||S_N f||^2_2 - \hat{f}(0)^2}}\left(\cos(nt_i)\widetilde{\mathbf{x}}_n + \sin(nt_i)\widetilde{\mathbf{y}}_n\right)\\
  \label{slidingwindowr}
  &= \sum\limits_{\substack{n=1 \\ n \equiv 0 \mod L}}^{N} \widetilde{r}_n\left(\cos(nt_i)\widetilde{\mathbf{x}}_n + \sin(nt_i)\widetilde{\mathbf{y}}_n\right), \quad \text{with} \quad \sum_{n=1}^{N} \widetilde{r}_n^2 = 1.
\end{align}
As studied in \cite[§5.6]{PereaH15}, a clear geometric picture of the centered and normalized sliding window point cloud arises for $S_N f$, since if we consider $S^1(r) \subset \mathbb{C}$ as a circle centered around zero with radius $r$ in the complex plane, then $t_i \rightarrow \varphi_\tau(t_i)$ is a curve on an $N$-torus $\mathbb{T} = S^1(\widetilde{r}_1) \times \cdots \times S^1(\widetilde{r}_N)$, as shown in Eq. \ref{slidingwindowr}.

Thus, we can use the zeroth persistent homology group - as a Euclidean metric dependent quantity between the connected components of Takens' embedding of the signal samples - and the first persistent homology group as features for our classifier to describe the torus on which the data lie. The dependence on the Euclidean metric is due to the choice of the embedding space of the simplicial complex. Another Riemannian manifold would also be conceivable, but we use the ordinary Riemannian metric of $\mathbb{R}^n$.

For the $N$-torus the Betti numbers are computed as $\beta_k = {N\choose k}$ for the $k$th persistent homology group. It follows that $\text{rank} \ H_1(\varphi_{\tau};\mathbb{F})$ corresponds to the number of $1$-spheres forming the product space $\mathbb{T}$ \cite{MelodiaL21}. Thus, the underlying manifold can be determined.

\subsection{Quasi-periodic Signals}
\label{quasiperiodic}
For the second case, we are interested in functions $f:[a,b] \rightarrow \mathbb{C}$, which are quasi-periodic signals of the form $f(t_i) = \sum_{l=0}^{L} \lambda_l e^{j\omega_lt_i}$, where $n \in \mathbb{N}$, $\lambda_l$ are non-negative complex numbers, and $\omega_l$ are incommensurate non-negative reals. As such, they are functions which also have incommensurate periods. Such signals naturally occur in the context of power plants.

Perea proved \cite[§2]{perea2016persistent}, that for points
\begin{align}
p_f(t_i) = (\lambda_0 e^{j\omega_0 t_i}, \lambda_1 e^{j\omega_1 t_i}, \cdots, \lambda_L e^{j\omega_L t_i})
\end{align}
and for $\lambda_l \in \mathbb{C}$, the generated set $T_f = \{p_f(t_i) \; \vert \; t_i \in \mathbb{Z}\}$ is dense within an $(N+1)$-torus $\mathbb{T}^{N+1} = S^1_{\lambda_0} \times S^1_{\lambda_1} \times \cdots \times S^1_{\lambda_N}$, by Kronecker's Theorem \cite{kronecker1884naherungsweise}. More succinct: if $0 < \tau < 2\pi / \max(\omega_l)$, then $\Omega_f$ has full rank. Moreover, if $M \geq N$, then the sliding window point cloud is given by
\begin{align}
\mathbb{S}\mathbb{W}_{M,\tau} f := \left\{ \text{SW}_{M,\tau} f(t_i) \; \vert \; t_i \in \mathbb{Z}\right\},
\end{align}
and is dense within a space homeomorphic to $\mathbb{T}^{N+1}$  \cite[§2.1]{perea2016persistent}.

We address a rather trivial observation that makes all reasoning amenable to computations in the real number field: The complex plane has an isomorphism of sets given by the bijection $\wp: \mathbb{R} \times \mathbb{R} \rightarrow \mathbb{C}, (t_i,s) \rightarrow t_i + js$, with $j^2 = -1$, where $\times$ denotes the direct product of sets. The vector space structure can be defined on both, $\mathbb{R} \times \mathbb{R}$ as well as on $\mathbb{C}$, with basis $B_{\mathbb{R} \times \mathbb{R}} = \{(1,0),(0,1)\}$ and $B_{\mathbb{C}} = \{(1,j)\}$. In the category of rings and fields the two objects differ, but for our purposes we use the bijection $\wp$ as an isometry between $\mathbb{C}$ and $\mathbb{R}^2$ -- allowing us to apply the above arguments to real vector spaces. Since these $\mathbb{R}$-vector spaces are both of dimension two, they are isomorphic as $\mathbb{R}$-modules.

\section{Heuristic Choice of Parameters}
\label{heuristics}
Some of the hyper-parameters have to be estimated in advance by heuristic approaches. We describe them next and determine the ideal embedding dimension and time-delay of our particular data set.

\subsection{Time Delay $\tau$}
\label{timedelay}
We determine an optimal value for the time delay creating a partition for each time series $T_j$. For this approach, we write the interval $[t_{1}^{\min},t_{n}^{\max}] \subset T_j$ as the interval between the smallest and largest value for $t_i \in T_j$ of a discrete multi-set of time series values, i.e., $i \in \mathbb{Z}$. Accordingly, $t_{1}^{\min}$ is a global minimum and $t_{n}^{\max}$ is a global maximum. The interval is split into a partition of $s$-size:
\begin{align}
	\mathcal{P}_{T_j} :&= \left\{ \underbrace{[t_{1}^{\min}, t_{s}^{\min}]}_{:=[1]}, \underbrace{[t_{s + 1}, t_{2s + 1}]}_{:=[2]}, \underbrace{[t_{2s + 2}, t_{3s + 2}]}_{:=[3]}, \cdots, \underbrace{[t_{n-s}^{\max}, t_{n}^{\max}]}_{:=[n/s]} \right\}.
\end{align}

We choose $s$ as the smallest divisor of $n$ for the time steps whose particular partitioning contains the fewest turning points. We justify this choice by arguing that a large number of elements are defined in $\mathcal{P}_{T_j}$, but each interval most likely contains a single period. Hereafter we will refer to all intervals with $k \in \{1,2, \cdots, n/s\}$ as \emph{bins}.

Let $\mathsf{P}([k]) := \mathsf{P}(t_i \in [k])$ denote the probability that $t_i$ is contained in the $k$th bin. Let $\mathsf{P}([k],[l]) : = \mathsf{P}(t_i \in [k], t_{i+\tau} \in [l])$ denote the probability that $t_i$ is contained in the $k$th bin, while $t_{i+\tau}$ is contained in the $l$th bin, with $k \neq l$. The mutual information $\mathsf{I}([k],[l] \vert \tau)$ for these two probability distributions is then computable as follows:
\begin{align}
	\mathsf{I}([k],[l] \vert \tau) = - \sum_{k=1}^{n/s} \sum_{l=1}^{n/s} \mathsf{P}([k],[l] \vert \tau) \; \text{ld} \; \frac{\mathsf{P}([k],[l] \vert \tau)}{\mathsf{P}([k] \vert \tau)\mathsf{P}([l] \vert \tau)}.
\end{align}
To gain intuition, recall the entropy for a finite alphabet $T_j = \{t_1,t_2,\ldots,t_n\}$ as $\mathsf{H}(T_j) = - \sum_{t_i \in T_j} \mathsf{P}(t_i) \; \text{ld} \; \mathsf{P}(t_i)$, which gives the mean value of bits for each element to encode it uniquely. Hence, the mutual information indicates how many bits are needed to generate a second random variable, given one at hand, with both functions running over the alphabet.

The optimal period $\tau$ is obtained for $\min_\tau \mathsf{I}([k],[l] \vert \tau)$, since we get the most information when we add another value $t_{i+\tau}$ to this particular subset. We minimize $\mathsf{I}([k],[l] \vert \tau)$. We get as a result $\tau=1$ for over $98\%$ of the data. The size of one bin ranges from $463$ to $624$. Accordingly, we choose a sample size of $500$.

\subsection{Embedding Dimension $M$}
\label{embeddingdimension}
We examined about $18 \cdot 10^3$ different signals coming from four different combined cycle gas turbine power plants with a total of two gas turbines, two boilers for steam generation and one steam turbine of the same construction type.

A constant embedding dimension must be specified for our model, which isn't optimal for the individual signal. However, since the toroidal embedding is also to be performed for the preprocessing of a new signal, a constant embedding dimension is a prerequisite. To determine a suitable $M$, we use the \emph{false nearest neighbor} algorithm. We apply the assumption from §\ref{embedding} that the embedding of a deterministic system in higher dimensions is smooth, relying on the generality of Takens' embedding. Thus, points that are close as measured values on the graph of their function will be close in their embedding with respect to the induced Euclidean norm in data space. For two points $t_i,t_k \in T_j$ there is an $\epsilon^{(ik)}$ such that:
\begin{wrapfigure}[13]{r}{2.7cm}
  \begin{tikzpicture}[scale=0.7]
	\begin{axis}[
	     width  = 5cm,
	     hide y axis,
	     axis x line*=bottom,
	     height = 4.95cm,
	     bar width=15pt,
	     symbolic x coords={1, 2, 3, 4},
	     xticklabels={M=1, M=2, M=3, M=4, M=5},
	     nodes near coords,
	     ymin=0
	     ]
	     \addplot[ybar, fill=darkblue] coordinates {
	          (1,4345)
	          (2,2594)
	          (3,3877)
	          (4,7347)
	     };
	\end{axis}
	\end{tikzpicture}
	\caption{\footnotesize{Cardinality of the set of time series with their optimal embedding dimension $M$.}}
  \label{countsembeddingM}
\end{wrapfigure}
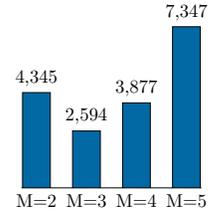

\begin{align}
\epsilon^{(ik)} = \frac{|t_{i+M\tau} - t_{k+M\tau}|}{\left(\frac{1}{n} \sum_{l=1}^{n}(t_l-\frac{1}{n}\sum_{l=1}^{n} t_l )^2\right)^{\frac{1}{2}}} > \epsilon.
\label{falseneighbors}
\end{align}

The $L^1$-norm is chosen for the numerator because it is more robust to outliers in the sense that the values are not higher powers of the absolute value. Whenever the inequality in Eq. \ref{falseneighbors} is satisfied, $t_i$ and $t_k$ are called \emph{false nearest neighbors}. As suggested by the authors of the original algorithm \cite{kennel2002false}, the difference $\vert t_{i+M\tau} - t_{k+M\tau} \vert$ would in probability be as large as $\sqrt{2}\sigma_{T_j}$, if the data were white noise of standard deviation $\sigma$. Thus, we set $\epsilon = \sqrt{2} \sigma_{T_j}$, for every time series $T_j \in \mathcal{T}$. The optimal embedding dimension is the optimization problem that minimizes the number of such false neighbors. We choose $M = 5$, since this is the ideal embedding for the majority of the time series according to Fig. \ref{countsembeddingM}, as we can always immerse all other signals with lower optimal embedding dimension.

\section{Results}
\label{result}
To compute simplicial complexes, filtrations and persistence representations, we used \texttt{GUDHI v.3.4.1} \cite{maria2014gudhi} and \texttt{giotto-tda v.0.4.0} \cite{tauzin2021giotto}. Neural networks were written in \texttt{Keras v.2.4.0} \cite{chollet2015} and trained using \texttt{Tensorflow v.2.4.0} \cite{tensorflow2015-whitepaper} as backend on \texttt{NVIDIA Quadro RTX 4000} graphics cards. \texttt{cuDNN v.8.4.0} \cite{chetlur2014cudnn} is used to enable graphics acceleration for LSTMs.
\definecolor{swblue}{RGB}{19, 82, 138}
\definecolor{lstm}{RGB}{0, 51, 102}
\definecolor{swred}{RGB}{253, 94, 83}
\definecolor{hom0}{RGB}{206, 79, 74}
\definecolor{hom1}{RGB}{248, 201, 60}
\definecolor{data}{RGB}{50, 40, 66}

\begin{figure}[!b]
  \begin{tikzpicture}
    \draw[draw=black, fill=black] (0,0) rectangle (2.5,0.4);
    \node at (1.25,0.175) {\small\color{white}\texttt{Shape: (500,1)}};
    \draw[draw=black, fill=black] (2.75,0) rectangle (5.25,0.4);
    \node at (4,0.175) {\small\color{white}\texttt{Shape: (500,1)}};
    \draw[draw=black, fill=black] (5.5,0) rectangle (8,0.4);
    \node at (6.75,0.175) {\small\color{white}\texttt{Shape: (500,1)}};

    \draw[draw=black, fill=swred] (0,-0.2) rectangle (2.5,-0.6);
    \draw[draw=black, fill=swred] ((2.75,-0.2) rectangle (5.25,-0.6);
    \draw[draw=black, fill=swred] (5.5,-0.2) rectangle (8,-0.6);
    \draw[draw=black, fill=swblue] (0,-0.59) rectangle (2.5,-0.99);
    \node at (1.25,-0.4) {\small\color{white}$\texttt{Conv1D\_1\_01}$};
    \node at (4,-0.4) {\small\color{white}$\texttt{Conv1D\_2\_01}$};
    \node at (6.75,-0.4) {\small\color{white}$\texttt{Conv1D\_3\_01}$};
    \node at (1.25,-0.8) {\small\color{white}\texttt{BatchNorm}};

    \draw[draw=black, fill=swred] (0,-1.2) rectangle (2.5,-1.6);
    \draw[draw=black, fill=swred] (2.75,-1.2) rectangle (5.25,-1.6);
    \draw[draw=black, fill=swred] (5.5,-1.2) rectangle (8,-1.6);
    \draw[draw=black, fill=swblue] (0,-1.6) rectangle (2.5,-1.99);
    \node at (1.25,-1.4) {\small\color{white}$\texttt{Conv1D\_1\_02}$};
    \node at (4,-1.4) {\small\color{white}$\texttt{Conv1D\_2\_02}$};
    \node at (6.75,-1.4) {\small\color{white}$\texttt{Conv1D\_3\_02}$};
    \node at (1.25,-1.8) {\small\color{white}\texttt{BatchNorm}};
    \node at (1.25, -2.4) {$\vdots$};
    \draw[draw=black, fill=swred] (0,-3.0) rectangle (2.5,-3.4);
    \draw[draw=black, fill=swred] ((2.75,-3.0) rectangle (5.25,-3.4);
    \draw[draw=black, fill=swred] (5.5,-3.0) rectangle (8,-3.4);
    \draw[draw=black, fill=swblue] (0,-3.39) rectangle (2.5,-3.8);
    \node at (1.25,-3.2) {\small\color{white}$\texttt{Conv1D\_1\_42}$};
    \node at (4,-3.2) {\small\color{white}$\texttt{Conv1D\_2\_42}$};
    \node at (6.75,-3.2) {\small\color{white}$\texttt{Conv1D\_3\_42}$};
    \node at (1.25,-3.6) {\small\color{white}\texttt{BatchNorm}};
    \draw[black] (1.25,0)--(1.25,-0.2);
    \draw[black] (1.25,-1)--(1.25,-1.2);
    \draw[black] (1.25,-2.0)--(1.25,-2.25);
    \draw[black, ->] (1.25,-2.75)--(1.25,-3.0);
    \draw [black,->] (2.2,-0.4) to [out=20,in=90] (2.2,-1.45);
    \node[scale=0.5] at (2.32,-0.8) {\small$\circled{$+$}$};
    \draw [black,->] (2.2,-1.5) to [out=20,in=90] (2.2,-2.3);
    \draw [black,->] (2.2,-2.75) -- (2.2,-3.25);
    \node at (2.2, -2.4) {$\vdots$};
    \draw[black] (4,0)--(4,-0.2);
    \draw[black] (4,-0.6)--(4,-1.2);
    \draw[black] (4,-1.6)--(4,-2.25);
    \draw[black, ->] (4,-2.75)--(4,-3.0);
    \draw [black,->] (4.95,-0.4) to [out=20,in=90] (4.95,-1.45);
    \node[scale=0.5] at (5.07,-0.8) {\small$\circled{$+$}$};
    \draw [black,->] (4.95,-1.5) to [out=20,in=90] (4.95,-2.3);
    \draw [black,->] (4.95,-2.75) -- (4.95,-3.25);
    \node at (4.95, -2.4) {$\vdots$};
    \node at (4, -2.4) {$\vdots$};

    \draw[black] (6.75,0)--(6.75,-0.2);
    \draw[black] (6.75,-0.6)--(6.75,-1.2);
    \draw[black] (6.75,-1.6)--(6.75,-2.25);
    \draw[black, ->] (6.75,-2.75)--(6.75,-3.0);
    \draw [black,->] (7.75,-0.4) to [out=20,in=90] (7.75,-1.45);
    \node[scale=0.5] at (7.87,-0.8) {\small$\circled{$+$}$};
    \draw [black,->] (7.75,-1.5) to [out=20,in=90] (7.75,-2.3);
    \draw [black,->] (7.75,-2.75) -- (7.75,-3.25);
    \node at (7.75, -2.4) {$\vdots$};
    \node at (6.75, -2.4) {$\vdots$};

    \draw[draw=black, fill=lstm] ((2.75,-4.5) rectangle (5.25,-4.9);
    \node at (4,-4.7) {\small\color{white}\texttt{LSTM\_01}};
    \draw[draw=black, fill=lstm] ((2.75,-5.1) rectangle (5.25,-5.5);
    \node at (4,-5.3) {\small\color{white}\texttt{LSTM\_02}};
    \draw[draw=black, fill=lstm] ((2.75,-6.3) rectangle (5.25,-6.7);
    \node at (4,-6.5) {\small\color{white}\texttt{LSTM\_22}};
    \draw[draw=black, fill=black] ((2.75,-6.9) rectangle (5.25,-7.3);
    \node at (4,-7.1) {\small\color{white}\texttt{Classes: (303)}};
    \draw [black,->] (4.95,-4.75) to [out=20,in=90] (4.95,-5.35);
    \draw [black,->] (4.95,-5.4) to [out=20,in=90] (4.95,-5.7);
    \node[scale=0.5] at (5.07,-5) {\small$\circled{$+$}$};
    \node at (4.95, -5.8) {$\vdots$};
    \draw [black,->] (4.95,-6.1) -- (4.95,-6.55);

    \node at (10.3,-1.37) {
      \resizebox{3.5cm}{!}{
      \begin{tabular}{lc}
      \toprule
      \textsc{Conv1D} & \textbf{Parameters} \\ \midrule
      Activation & $\frac{t}{1+e^{-\beta t}}$ \cite{ramachandran2017swish}\\\arrayrulecolor{gray}\hline
      \#-Layer & $42$ \\\arrayrulecolor{gray}\hline
      Filters & $64$\\\arrayrulecolor{gray}\hline
      Kernel-size & $3$\\\arrayrulecolor{gray}\hline
      Padding & Causal \\\arrayrulecolor{gray}\hline
      Kernel init. & Glorot normal \\\arrayrulecolor{gray}\hline
      Bias init. & Zeros \\\arrayrulecolor{gray}\hline
      Residual & $\mathcal{C}^1$ \cite{HauserGJR19}\\\arrayrulecolor{gray}\hline
      L$^1$-regularization & $0.001$ \\\arrayrulecolor{gray}\hline
      L$^2$-regularization & $0.01$ \\\arrayrulecolor{black}
      \bottomrule
    \end{tabular}}
    };
    \node at (10.3,-5) {
      \resizebox{3.5cm}{!}{
      \begin{tabular}{lc}
      \toprule
      \textsc{LSTM} & \textbf{Parameters} \\ \midrule
      Activations & See \cite{hochreiter1997long}\\\arrayrulecolor{gray}\hline
      \#-Layer & $22$ \\\arrayrulecolor{gray}\hline
      Units & $32$\\\arrayrulecolor{gray}\hline
      Kernel init. & Glorot normal \\\arrayrulecolor{gray}\hline
      Bias init. & Zeros \\\arrayrulecolor{gray}\hline
      Residual & $\mathcal{C}^1$ \cite{HauserGJR19}\\\arrayrulecolor{gray}\hline
      L$^1$-regularization & $0.001$ \\\arrayrulecolor{gray}\hline
      L$^2$-regularization & $0.01$ \\\arrayrulecolor{black}
      \bottomrule
    \end{tabular}}
    };
    \draw[black] (1.25,-3.8) to [out=-120,in=90] (4.15,-4.2);
    \draw[black] (6.75,-3.8) to [out=-60,in=90] (3.9,-4.2);
    \draw[black] (6.75,-3.4) to [out=-90,in=110] (6.75,-3.8);
    \draw[black, ->] (4,-3.4)--(4,-4.5);
    \node[scale=0.5] at (4,-4.1) {\huge$\circled{$+$}$};
    \draw[black] (4,-4.9)--(4,-5.1);
    \draw[black] (4,-5.5)--(4,-5.7);
    \node at (4,-5.8) {$\vdots$};
    \draw[black,->] (4,-6.1)--(4,-6.3);
    \draw[black,->] (4,-6.7)--(4,-6.9);
  \end{tikzpicture}
  \caption{An illustration of the derived architecture. We trained on sigmoid focal cross-entropy loss \cite{ross2017focal}, with a batch size of $128$, a learning rate of $10^{-4}$ with stopping patience for $5$ epochs, and a minimum learning rate of $10^{-6}$ for about $6 \cdot 10^3$ epochs. The task was to classify $303 \cdot 10^3$ samples into $303$ classes, with $10^3$ samples per class. We used $10\%$ of the data for validation and did no further tuning of hyper-parameters.}
  \label{architecture}
\end{figure}
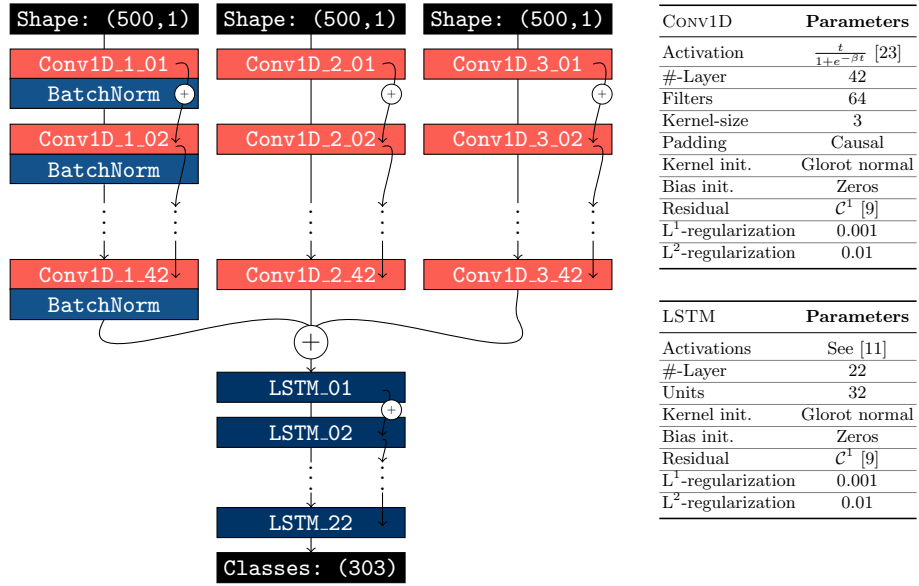

In a first step, we removed all time series from the data that do not have persistent features of the zeroth and first homology group in their persistence diagrams. Thus, we removed all time series with persistence entropy
\begin{align*}
\mathsf{H}\left(\mathfrak{P}_{\mathbb{SW}_{M,\tau}f}\right) &= -\sum_{(d_j,b_i) \in \mathfrak{P}_{\mathbb{SW}_{M,\tau}f}} \frac{d_j-b_i}{\kappa} \; \text{ld} \; \left(\frac{d_j-b_i}{\kappa}\right) \geq 0.98,\\
\text{where} \quad \kappa &= \sum\limits_{(d_j,b_i) \in \mathfrak{P}_{\mathbb{SW}_{M,\tau}f}}d_j-b_i.
\end{align*}
The goal is to get rid of erroneous measurements, as well as (nearly) uniformly distributed signals or constants. We then replaced all \texttt{NaN} values in the time series and Betti curves with their corresponding median, respectively.

\subsection{Persistence Representations}
Each individual colored curve in Fig. \ref{persrepr} (a), (c) represents a recorded signal of a power plant within the respective component over one year.

Fig. \ref{persrepr} (b), (d) show the corresponding persistence silhouettes.

\subsubsection{$\beta_0$-curves} The $\beta_0$-curves in Fig. \ref{persrepr} (a) count the number of representatives of the zeroth homology group at each parameter $y$ of the filtration. Plateaus indicate persistent properties, i.e., connectivity is preserved for some time. The curve for $\beta_0$ is monotonically decreasing. Initially, each measurement counts itself as a connected component -- as it is an embedded point, so the amount of points available is trivially the largest value of each $\beta_0$-curve. Thus, the plot of Betti curves gives information about the number of points per signal. It visualizes the class balance of the data set. We can measure this balance considering the persistence entropy of all $\beta_0$-curves from Fig. \ref{persrepr} (a), restricted to $y=0$, of about $67\%$. For $y \in \{1, \ldots, 20\}$, i.e., the first twenty recorded steps of the filtration, the entropy of the $\beta_0$-curves in Fig. \ref{persrepr} (a) -- of all signals -- varies on average between about $67\%$ and $45\%$, but decreases as $y$ increases. This means that there is a large difference in the number of connected components in the first $20\%$ of the computed filtration. Entropy gives a measure of the distribution of connected components among the signals at some specific parameter $y$ of the filtration. Thus, we can assume that most of the data differs with regard to its density distribution in Euclidean space. Further, Fig. \ref{persrepr} (a) shows $\beta_0$-curves, restricted to $x=500$, which are almost diagonal. This means that connected components are present in large number throughout the filtration for this particular signal.

\subsubsection{$\beta_1$-curves} Similarly, the persistence entropy of the $\beta_1$-curves in Fig. \ref{persrepr} (c), restricted to $y=50$, is about $47\%$. Therefore, the signals differ by the number of cyclic elements. This can also be easily seen by looking at the scale of the $z$-axes in Fig. \ref{persrepr} (a).

Recall that the signals are toroidally embedded. If no connected components are merged, it means that along the filtration we can expect an $S^1$-factor of the torus. In other words, the representatives of the first homology group, which we can see in the diagrams Fig. \ref{persrepr} (c), (d), count the periodic elements within the signal. The toroidal embedding allows us to interpret that certain modes of the original signal lie on different tori. This claim is directly observable in Fig. \ref{persrepr} (c) (3); the diagram shows numerous counts of holes within the embedding per signal, even for higher filtration parameters.

Since the dimension $N$ of the torus corresponds to the rank of its first homology group, we can also read the dimension of the toroidal embedding of a signal from the persistence diagrams and thus from its $\beta_1$-curve in Fig. \ref{persrepr} (c) (d). As $\beta_0$-curves contain the information about connected components and the dimension of the torus is determined by the first homology group, we get information about the object we are embedding into and its geometry. According to Takens' embedding, each period in the signal is mapped to a $1$-sphere of the factor spaces of the $N$-torus, see §\ref{periodic} and §\ref{quasiperiodic}, which is why the $\beta_1$-curves `\emph{encode events}' within the power plant, recorded by the respective sensor.

The Betti curves lie in a Hilbert space, allowing statistics to be computed, and can themselves be resolved to fit the length of the time series sample, making them particularly suitable for the neural network architecture we have chosen.

\subsection{Classification}
\label{classification}
\begin{table}[t]
  \centering
  \begin{tabular}{cccccccc}
  \toprule
  \textbf{OS} & \textbf{F } & \textbf{A } & \textbf{OR} & \textbf{Accuracy} & \textbf{F1} & \textbf{Precision} & \textbf{Recall}\\\midrule
  \multicolumn{8}{c}{\textsc{$\mathcal{C}^0$-ConvNet without topological features similar to Fig. \ref{architecture}:}} \\\midrule
  \cmark & \cmark & \cmark & \cmark & 0.4821 {\tiny$\pm 0.0031$} & 0.5677 {\tiny$\pm 0.0033$} & 0.6912 {\tiny$\pm 0.0029$} & 0.4816 {\tiny$\pm 0.0037$}\\\hdashline
  \color{gray}\cmark & \color{gray}\xmark & \color{gray}\xmark & \color{gray}\xmark & \color{gray}0.7129 {\tiny$\pm 0.0102$} & \color{gray}0.7904 {\tiny$\pm 0.0092$} & \color{gray}0.9010 {\tiny$\pm 0.0097$} & \color{gray}0.7041 {\tiny$\pm 0.0088$} \\\hdashline
  \color{gray}\cmark & \color{gray}\cmark & \color{gray}\xmark & \color{gray}\xmark & \color{gray}0.5691 {\tiny$\pm 0.0037$} & \color{gray}0.6830 {\tiny$\pm 0.0058$} & \color{gray}0.8699 {\tiny$\pm 0.0065$} & \color{gray}0.5622 {\tiny$\pm 0.0052$} \\\hdashline
  \color{gray}\cmark & \color{gray}\cmark & \color{gray}\cmark & \color{gray}\xmark & \color{gray}0.5426 {\tiny$\pm 0.0055$} & \color{gray}0.6681 {\tiny$\pm 0.0036$} & \color{gray}0.8682 {\tiny$\pm 0.0048$} & \color{gray}0.5429 {\tiny$\pm 0.0029$} \\\midrule
  \multicolumn{8}{c}{\textsc{$\mathcal{C}^0$-ConvNet as in Fig. \ref{architecture}:}} \\\midrule
  \cmark & \cmark & \cmark & \cmark & 0.6142 {\tiny$\pm 0.0047$} & 0.6212 {\tiny$\pm 0.0077$} & 0.7681 {\tiny$\pm 0.0082$} & 0.5216 {\tiny$\pm 0.0073$} \\\hdashline
  \textbf{\color{gray}\cmark} & \textbf{\color{gray}\xmark} & \textbf{\color{gray}\xmark} & \textbf{\color{gray}\xmark} & \textbf{\color{gray}0.8316 {\tiny$\pm 0.0121$}} & \textbf{\color{gray}0.8511 {\tiny$\pm 0.0063$}} & \textbf{\color{gray}0.9327 {\tiny$\pm 0.0163$}} & \textbf{\color{gray}0.7827 {\tiny$\pm 0.0039$}} \\\hdashline
  \color{gray}\cmark & \color{gray}\cmark & \color{gray}\xmark & \color{gray}\xmark & \color{gray}0.7024 {\tiny$\pm 0.0091$} & \color{gray}0.7567 {\tiny$\pm 0.0101$} & \color{gray}0.8756 {\tiny$\pm 0.0109$} & \color{gray}0.6663 {\tiny$\pm 0.0094$} \\\hdashline
  \color{gray}\cmark & \color{gray}\cmark & \color{gray}\cmark & \color{gray}\xmark & \color{gray}0.6291 {\tiny$\pm 0.0078$} & \color{gray}0.7376 {\tiny$\pm 0.0065$} & \color{gray}0.8726 {\tiny$\pm 0.0056$} & \color{gray}0.6389 {\tiny$\pm 0.0077$} \\\midrule
  \multicolumn{8}{c}{\textsc{$\mathcal{C}^1$-ConvNet as in Fig. \ref{architecture}:}} \\\midrule
  \textbf{\cmark} & \textbf{\cmark} & \textbf{\cmark} & \textbf{\cmark} & \textbf{0.6383 {\tiny$\pm 0.0085$}} & \textbf{0.6566 {\tiny$\pm 0.0055$}} & \textbf{0.7849 {\tiny$\pm 0.0074$}} & \textbf{0.5597 {\tiny$\pm 0.0076$}} \\\hdashline
  \color{gray}\cmark & \color{gray}\xmark & \color{gray}\xmark & \color{gray}\xmark & \color{gray}0.8221 {\tiny$\pm 0.0028$} & \color{gray}0.8497 {\tiny$\pm 0.0023$} & \color{gray}0.9267 {\tiny$\pm 0.0033$} & \color{gray}0.7846 {\tiny$\pm 0.0018$} \\\hdashline
  \textbf{\color{gray}\cmark} & \textbf{\color{gray}\cmark} & \textbf{\color{gray}\xmark} & \textbf{\color{gray}\xmark} & \textbf{\color{gray}0.7284 {\tiny$\pm 0.0019$}} & \textbf{\color{gray}0.7670 {\tiny$\pm 0.0027$}} & \textbf{\color{gray}0.8826 {\tiny$\pm 0.0017$}} & \textbf{\color{gray}0.6782 {\tiny$\pm 0.0066$}} \\\hdashline
  \textbf{\color{gray}\cmark} & \textbf{\color{gray}\cmark} & \textbf{\color{gray}\cmark} & \textbf{\color{gray}\xmark} & \textbf{\color{gray}0.6524 {\tiny$\pm 0.0009$}} & \textbf{\color{gray}0.7276 {\tiny$\pm 0.0028$}} & \textbf{\color{gray}0.8821 {\tiny$\pm 0.0032$}} & \textbf{\color{gray}0.6192 {\tiny$\pm 0.0025$}} \\
  \bottomrule
  \end{tabular}
  \vspace{0.4cm}
  \caption{Classification results. The used, corresponding parts of the reference designation system are marked as such (\cmark/\xmark). We tested with $10$-fold cross-validation and report the mean values of the measurements with associated standard deviation. The experiments with complete/incomplete identifiers are colored ({$\bullet$}/{\color{gray}$\bullet$}), respectively.}
  \label{results}
  \vspace{-0.4cm}
\end{table}

The reference designation system for power plants is a labeling system consisting of four levels of structure. First, the overall plant is designated by the use of a letter (L) or a digit (D).  The second level of detail designates a higher-level functional system in the overall plant and consists of three letters and two digits with an optional leading digit. The third outline level designates an aggregate in the sub-plant. It consists of two letters and three digits. The letters are assigned to the aggregates in power plants (e.g. a measurement) according to a certain key. The fourth subdivision level designates a device or a signal indicator in the aggregate. It consists of two letters and two digits. The letters are assigned to a piece of equipment (e.g. a drive) according to a given key:
\begin{align*}
  \overbrace{\text{L or D}}^{\text{Overall system (OS)}} \quad \overbrace{\text{(D)LLLDD}}^{\text{Function (F)}} \quad \overbrace{\text{LLDDD(L)}}^{\text{Aggregate (A)}} \quad \overbrace{\text{LLDD}}^{\text{Operating resources (OR)}}.
\end{align*}
In each case, we classify all training samples, each drawn without replacement from one sensor signal recorded over one year. After transforming the sample with the parameters determined from §\ref{timedelay} and §\ref{embeddingdimension} using Takens' embedding, we compute the Betti curves for $\beta_0$ and $\beta_1$ of its persistence diagram. Afterwards, we use them as input to the sub-nets without batch normalization as shown in Fig. \ref{architecture}. Thus, we enrich the neural network input with information about connectedness and the underlying $N$-dimensional torus of the embedded signal.

We summarize the results from Tab. \ref{results} as follows:
\begin{enumerate}
  \item The best classification results for the power plant reference designation system have been measured with an accuracy of approximately $64\%$ (OS F A OR), about $83\%$ for the \emph{overall system} (OS), $73\%$ for the \emph{functional level} (OS F), and $65\%$ for the \emph{aggregate} (OS F A) on the validation data.
  \item We observe for all experiments that precision is higher than recall. We interpret the high precision as a solid exactness of our classifiers. Recall, on the other hand, is significantly lower, as an indicator of the completeness of the classifier. The higher the recall value, the less accurate our classifiers are at assigning signals to the corresponding labels. This can already be anticipated from the Betti curves shown in Fig. \ref{persrepr}, which reveal similar features for the zeroth/first persistent homology group for some of the signals, symptomatic for their poor distinctiveness.
  \item We have shown, that residual networks improve the classification results for all labels except for the assignment to the \emph{overall system} (OS). Further investigation is needed to find an explanation for this behavior.
  \item Moreover, using $\beta_0$ and $\beta_1$-curves we could improve the expected value of the classification results for all label variants studied, see Tab. \ref{results}.
\end{enumerate}

\section{Summary}
\label{discussion}
In this work, we used Betti curves of the zeroth and first persistent homology group as feature vectors for neural networks. Using recent research, we have shown how these two persistent homology groups naturally `\emph{encode events}' in (quasi-)periodic time series and justified their suitability mathematically. We used a variant of Perea's framework for the analysis of (quasi-)periodicity, and applied it as a feature generator for machine learning on irregular time series of measurements taken from power plants. The topological features obtained through Takens' delay
embedding and persistent homology are used to enrich the input of a deep neural network. We have shown that these extra features significantly improve the network's performance in a classification task.

Moreover, we have shown that the experiments also reveal an improvement in classification through $\beta_0$ and $\beta_1$-curves for all hierarchy levels of the power plant reference designation system. We designed the architecture using residual connections and were able to confirm their usefulness compared to the same architecture without such connections.

Our future work will address the following open questions:
\begin{itemize}
  \item How well does our model perform when trained on a larger and more complete data set with signals from multiple entire power plants, but validated with sensor data from an independent new power plant?
  \item Can we improve the classifier by first training it at the coarsest hierarchical level of the power plant reference designation system (OS) and then using the resulting weights to initialize training for more detailed levels of labels (such as F,A) down to the operating resource (OR) in a recursive manner?
\end{itemize}

%
% ---- Bibliography ----
%
% BibTeX users should specify bibliography style 'splncs04'.
% References will then be sorted and formatted in the correct style.
%
\bibliographystyle{splncs04}
\bibliography{mybibliography}

\begin{thebibliography}{10}
\providecommand{\url}[1]{\texttt{#1}}
\providecommand{\urlprefix}{URL }
\providecommand{\doi}[1]{https://doi.org/#1}

\bibitem{tensorflow2015-whitepaper}
Abadi, M., et~al.: {TensorFlow}: Large-scale machine learning on heterogeneous
  systems (2015), \url{http://tensorflow.org/}, software available from
  tensorflow.org

\bibitem{bubenik2015statistical}
Bubenik, P.: Statistical topological data analysis using persistence
  landscapes. Journal of Machine Learning Research  \textbf{16}(1),  77--102
  (2015)

\bibitem{chazal2014stochastic}
Chazal, F., et~al.: Stochastic convergence of persistence landscapes and
  silhouettes. Annual Symposium on Computational Geometry pp. 474--483 (2014)

\bibitem{chetlur2014cudnn}
Chetlur, S., et~al.: cudnn: Efficient primitives for deep learning.
  arXiv:1410.0759  (2014), \url{https://developer.nvidia.com/cudnn}

\bibitem{chollet2015}
Chollet, F., et~al.: Keras (2015), \url{https://github.com/fchollet/keras}

\bibitem{cohen2007stability}
Cohen-Steiner, D., Edelsbrunner, H., Harer, J.: Stability of persistence
  diagrams. Discrete \& Computational Geometry  \textbf{37}(1),  103--120
  (2007)

\bibitem{de2011dualities}
De~Silva, V., Morozov, D., Vejdemo-Johansson, M.: Dualities in persistent (co)
  homology. Inverse Problems  \textbf{27}(12) (2011)

\bibitem{hatcher2005algebraic}
Hatcher, A.: Algebraic topology. Cambridge University Press (2005)

\bibitem{HauserGJR19}
Hauser, M., Gunn, S., et~al.: State-space representations of deep neural
  networks. Neural Computation  \textbf{31}(3) (2019)

\bibitem{hausmann1995vietoris}
Hausmann, J.C.: On the vietoris-rips complexes and a cohomology theory for
  metric spaces. Annals of Mathematics Studies  \textbf{138},  175--188 (1995)

\bibitem{hochreiter1997long}
Hochreiter, S., Schmidhuber, J.: Long short-term memory. Neural Computation
  \textbf{9}(8),  1735--1780 (1997)

\bibitem{kennel2002false}
Kennel, M., Abarbanel, H.: False neighbors and false strands: A reliable
  minimum embedding dimension algorithm. Physical Review E  \textbf{66} (2002)

\bibitem{konigstein2007rds}
K{\"o}nigstein, H., M{\"u}ller, H., Kaiser, J.: Das RDS-PP: {\"U}bergang vom
  KKS zu einer internationalen Norm, vol.~8. VGB Powertech (2007)

\bibitem{kronecker1884naherungsweise}
Kronecker, L.: Näherungsweise ganzzahlige Auflösung linearer Gleichungen.
  Monatsberichte der Königlich Preussischen Akademie der Wissenschaft (1884)

\bibitem{maria2014gudhi}
Maria, C., et~al.: The gudhi library: Simplicial complexes and persistent
  homology. International Congress on Mathematical Software pp. 167--174 (2014)

\bibitem{MelodiaL20}
Melodia, L., Lenz, R.: Persistent homology as stopping-criterion for voronoi
  interpolation. International Workshop on Combinatorial Image Analysis pp.
  29--44 (2020)

\bibitem{MelodiaL21}
Melodia, L., Lenz, R.: Estimate of the neural network dimension using algebraic
  topology and lie theory. Image Mining. Theory and Applications VII  (2021)

\bibitem{milnor1997topology}
Milnor, J., Weaver, D.: Topology from the Differentiable Viewpoint. Princeton
  University Press (1997)

\bibitem{packard1980geometry}
Packard, N., Crutchfield, J., Farmer, D., Shaw, R.: Geometry from a time
  series. Physical review letters  \textbf{45}(9), ~712 (1980)

\bibitem{perea2016persistent}
Perea, J.: Persistent homology of toroidal sliding window embeddings. IEEE
  International Conference on Acoustics, Speech and Signal Processing pp.
  6435--6439 (2016)

\bibitem{perea2019topological}
Perea, J.: Topological time series analysis. Notices of the American
  Mathematical Society  \textbf{66} (2019)

\bibitem{PereaH15}
Perea, J., Harer, J.: Sliding windows and persistence: An application of
  topological methods to signal analysis. Foundations of Computational
  Mathematics  \textbf{15},  799--838 (2015)

\bibitem{ramachandran2017swish}
Ramachandran, P., Zoph, B., Le, Q.V.: Swish: a self-gated activation function.
  arXiv:1710.05941  (2017)

\bibitem{ross2017focal}
Ross, T.Y., Doll{\'a}r, G.: Focal loss for dense object detection. In: IEEE
  Conference on Computer Vision and Pattern Recognition. pp. 2980--2988 (2017)

\bibitem{stone1948generalized}
Stone, M.H.: The generalized weierstrass approximation theorem. Mathematics
  Magazine  \textbf{21},  237--254 (1948)

\bibitem{takens1981detecting}
Takens, F.: Detecting strange attractors in turbulence. Dynamical Systems and
  Turbulence pp. 366--381 (1981)

\bibitem{tao2006analysis}
Tao, T.: Analysis I. Springer (2006)

\bibitem{tauzin2021giotto}
Tauzin, G., et~al.: giotto-tda: : A topological data analysis toolkit for
  machine learning and data exploration. Journal of Machine Learning Research
  \textbf{22},  39:1--39:6 (2021),
  \url{https://github.com/giotto-ai/giotto-tda}

\end{thebibliography}
\end{document}